\theoremstyle{plain}
\newtheorem{theorem}{Theorem}[section]
\newtheorem{proposition}[theorem]{Proposition}
\theoremstyle{definition}
\theoremstyle{remark}
\newcommand\numberthis{\addtocounter{equation}{1}\tag{\theequation}}
\newcommand{\codecomment}[1]{{\color{gray}{#1}}}
\definecolor{fbApp}{HTML}{c8e7fa}
\newcommand{\cc}{\cellcolor{fbApp}}
\definecolor{lightgrey}{rgb}{0.925, 0.925, 0.925}
\newlength\savewidth\newcommand\shline{\noalign{\global\savewidth\arrayrulewidth
  \global\arrayrulewidth 1pt}\hline\noalign{\global\arrayrulewidth\savewidth}}
\newcommand{\tablestyle}[2]{\setlength{\tabcolsep}{#1}\renewcommand{\arraystretch}{#2}\centering\footnotesize}
\newcolumntype{x}[1]{>{\centering\arraybackslash}p{#1pt}}
\newcolumntype{y}[1]{>{\raggedright\arraybackslash}p{#1pt}}
\newcolumntype{z}[1]{>{\raggedleft\arraybackslash}p{#1pt}}
\theoremstyle{plain}
\icmltitlerunning{Stochastic positional embeddings improve masked image modeling}
\begin{document}

\twocolumn[

\icmltitle{Stochastic positional embeddings improve masked image modeling}

\begin{icmlauthorlist}
 
\icmlauthor{Amir Bar}{1,2,3}
\icmlauthor{Florian Bordes}{3}
\icmlauthor{Assaf Shocher}{2}
\icmlauthor{Mahmoud Assran}{3}
\icmlauthor{Pascal Vincent}{3}
\icmlauthor{Nicolas Ballas}{3}
\icmlauthor{Trevor Darrell}{2}
\icmlauthor{Amir Globerson}{1}
\icmlauthor{Yann LeCun}{3,4}
\end{icmlauthorlist}

\icmlaffiliation{1}{Tel Aviv University}
\icmlaffiliation{2}{UC Berkeley}
\icmlaffiliation{3}{Meta AI (FAIR)}
\icmlaffiliation{4}{New York University}

\icmlcorrespondingauthor{Amir Bar}{amir.bar@cs.tau.ac.il}

\icmlkeywords{Machine Learning, ICML}

\vskip 0.3in
]

\printAffiliationsAndNotice{}  

\begin{abstract}
Masked Image Modeling (MIM) is a promising self-supervised learning approach that enables learning from unlabeled images. Despite its recent success, learning good representations through MIM remains challenging because it requires predicting the right semantic content in accurate locations. For example, given an incomplete picture of a dog, we can guess that there is a tail, but we cannot determine its exact location. In this work, we propose to incorporate location uncertainty into MIM by using stochastic positional embeddings (StoP). Specifically, we condition the model on stochastic masked token positions drawn from a Gaussian distribution. StoP reduces overfitting to location features and guides the model toward learning features that are more robust to location uncertainties. Quantitatively, StoP improves downstream MIM performance on a variety of downstream tasks, including $+1.7\%$ on ImageNet linear probing using ViT-B, and $+2.5\%$ for ViT-H using 1\% of the data.\footnote{See~\url{https://github.com/amirbar/StoP} for code.}
\end{abstract}
\section{Introduction}
\begin{figure}[t]
    \centering
   \captionsetup{type=figure}
\includegraphics[width=1.\linewidth]{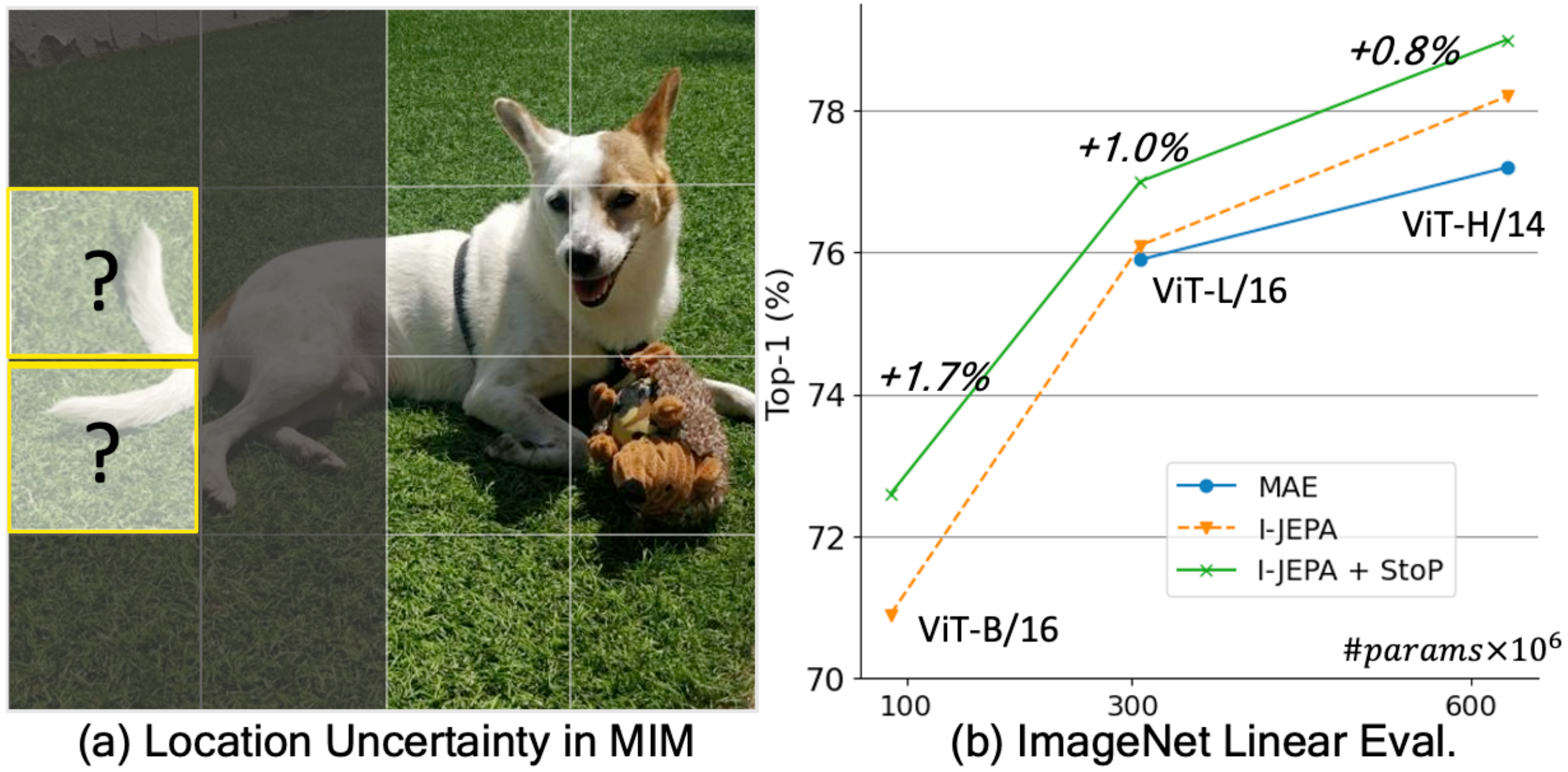}
    \captionof{figure}{\small Given a partial image of a dog, can you precisely determine the location of its tail? Existing Masked Image Modeling (MIM) models like MAE~\citep{he2021masked} and I-JEPA~\citep{assran2023self} predict tokens deterministically and do not model location uncertainties (a), we propose to predict the target (masked tokens) in stochastic positions (StoP) which prevents overfitting to locations features. StoP leads to improved MIM performance on downstream tasks, including linear probing on ImageNet (b).}
        \label{fig:teaser}
\end{figure}

Masked Image Modeling (MIM) enables learning from unlabeled images by reconstructing masked parts of the image given the rest of the image as context. In recently years, new MIM methods have emerged~\citep{xie2021simmim, bao2021beit, he2021masked, assran2023self}. Masked Auto-Encoders (MAE)~\citep{he2021masked} are trained to minimize a reconstruction error in pixel space, and I-JEPA~\citep{assran2023self} reconstructs image features. MIM is appealing compared to invariance-based self-supervised learning methods like DINO~\citep{caron2021emerging} and iBOT~\citep{zhou2021ibotyes} as MIM do not suffer from the same limitations, namely, it does not require heavy use of hand-crafted augmentations~\citep{xiaoshould,he2021masked}, mini-batch statistics, or a uniform cluster prior~\citep{assran2022hidden}.

Despite the recent success of MIM, we argue that learning good representations using MIM remains challenging due to location uncertainties because it requires predicting the right semantic content in accurate locations. For example, given an incomplete picture of a dog (see Figure~\ref{fig:teaser}a), we might guess there's a tail, but we can't be sure exactly where it is, as it could realistically be in several different places. Without explicitly modeling this location uncertainty, existing MIM models like MAE and I-JEPA might overfit on semantic content in arbitrary locations (e.g, the tail location).

In this work, we propose to address location uncertainty in MIM by turning existing MIM models into stochastic ones. Instead of training the model to make predictions in exact locations, we use Stochastic Positional embeddings (StoP) to introduce noise to the masked token's positions, implicitly forcing the model to make stochastic predictions. StoP guides the model towards learning features that are more resilient to location uncertainties, such as the fact that a tail exists in a general area rather than a specific point, which improves downstream performance (Figure~\ref{fig:teaser}b).

Specifically, we model the position of every masked token as a random variable with a Gaussian distribution where its mean is the position of the patch, and the covariance matrix is learned. We find it crucial to design StoP carefully so that the model does not collapse back to deterministic positional embeddings by scaling down the covariance matrix weights to overcome the noise. 

To prevent collapse, we propose to tie between the scales of the noise and input context. With this constraint, scaling down the noise also scales down the input context, which makes the reconstruction task too hard to achieve. On the other hand, increasing the scale of the noise leads to very stochastic masked token positions, which makes the reconstruction task difficult as well. We provide a theoretical proof, showing that our solution indeed prevents collapse.

Our contributions are as follows. First, we propose the idea of Stochastic Positional embeddings (StoP) and apply it to MIM to address the location uncertainty in MIM, namely that the location of semantic features is stochastic. Second, we demonstrate that adding StoP to I-JEPA, a recent MIM approach, leads to improved performance on a variety of downstream tasks, highlighting its effectiveness. Lastly, implementing StoP for MIM requires only three extra lines of code, without adding any runtime or memory overhead.

\section{Preliminaries - Masked Image Modeling}
\label{sec:mim}
The idea in MIM is to train a model to reconstruct masked parts in an image given the rest of the image as context. In this process, a neural network $f_{\theta}$ learns the context representations, and a network $g_{\phi}$ is used to reconstruct the masked regions. In this section we describe the MIM algorithm, then discuss how to apply StoP to MIM in Section~\ref{sec:stop}. 

\textbf{Patchification.} Given an image, the first stage is to tokenize the image. For the case of Vision Transformers~\cite{dosovitskiy2020image}, an input image $I_x\in \mathbb{R}^{H\times W\times 3}$ is first patchified into a sequence of non-overlapping image patches $\hat{p}=(\hat{p}_1,...,\hat{p}_k)$
where $\hat{p}_i \in\mathbb{R}^{H'\times W'\times 3}$ and $K=\frac{HW}{H'W'}$ is the number of patches. Then, each patch $\hat{p}_i$ is projected to $\mathbb{R}^{d_{e}}$ through a linear fully connected layer and its corresponding positional embedding features are added to it, resulting in the patchified set $p=\{p_1, ... p_K\}$.

\textbf{Masking.} Let $x = \{p_i | i \in B_x\}$
be the set of context patches where $B_x$ denotes the set of context indices (i.e.,, the visible tokens in Figure~\ref{fig:architecture}). We denote by $B_y$ the indices of the target patches $y$. The context and target patches are chosen via random masking as in~\citet{he2021masked} or by sampling target continuous blocks as in~\citet{assran2023self}. 

\textbf{Context encoding.} The context tokens are processed via an encoder model $f_{\theta}$ to obtain deep representations: ${s}_x = f_{\theta}(x)$, where $s_{x_i} \in\mathbb{R}^{d_e}$ is the $i^{th}$ context token representation. Each token $s_{x_i}$ is then projected from the output dimension of the encoder $d_e$ to the input dimension of the predictor $d_p$ via a matrix $B\in\mathbb{R}^{d_p \times d_e}$, and it is enriched with deterministic positional embedding $\psi_i \in \mathbb{R}^{d_p}$:
\begin{equation}
\label{eq:mim_context_tokens}
c_i = \psi_i + Bs_{x_i}
\end{equation}
\textbf{Masked tokens.}
We define the set of masked tokens, where every masked token $m_j$ for $j \in B_y$ is composed of the positional embeddings of the $j^{th}$ patch $\psi_j$ and a bias term $\Tilde{m}$ that is shared across all masked tokens, namely:
\begin{equation}
\label{eq:mim_masked_tokens}
m_j = {\psi}_j + \Tilde{m}
\end{equation}
\textbf{Prediction and loss.} Finally, the predictor function $g_{\phi}$ is applied to predict the target features $\hat{s}_y = g_{\phi}(c, m)$. To supervise the prediction, the ground truth $s_y=\{s_{y_i}\}_{i\in B_y}$ is obtained either by using the raw RGB pixels or via a latent representation of the pixels. The loss $\frac{1}{\lvert B_y \rvert}\sum_{i \in B_y}L(s_{y_i}, \hat{s}_{y_i}) $ is then applied to minimize the prediction error.

\begin{figure*}[t!]
\centering
\includegraphics[width=1\linewidth]{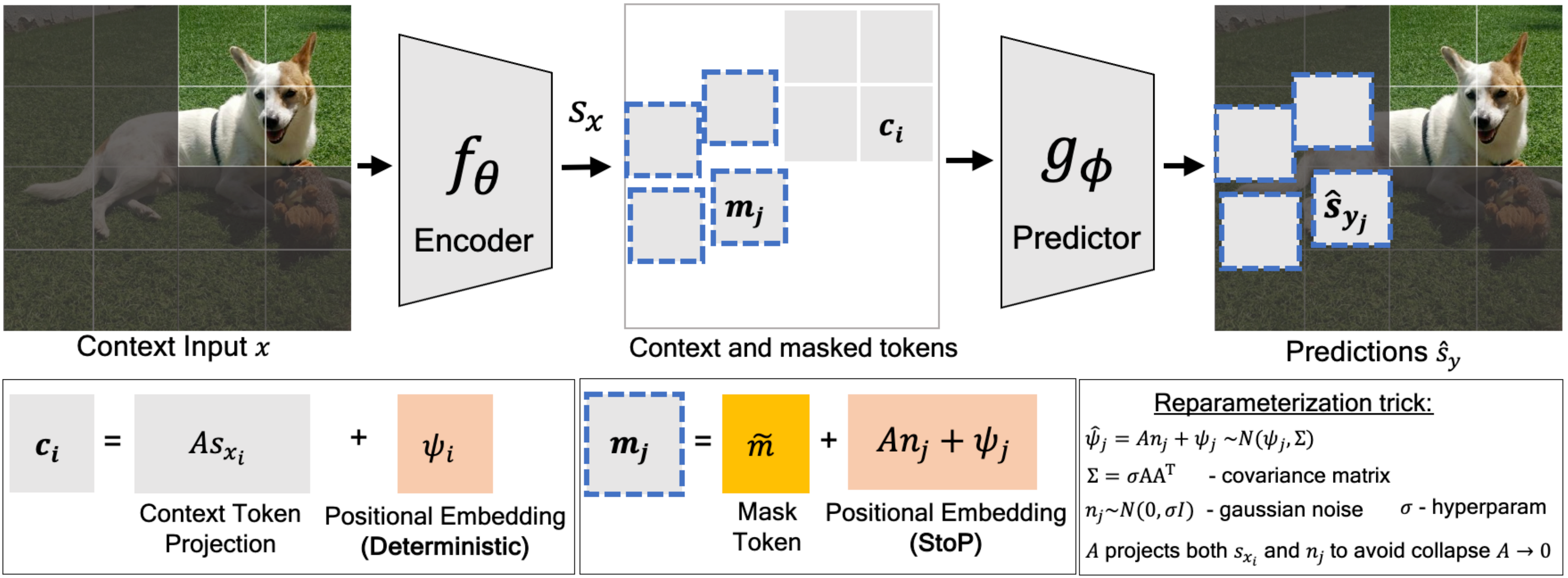}
\caption{\small\textbf{Masked image modeling using stochastic positional embeddings (StoP).} 
$g_{\phi}$ predicts target tokens given masked tokens with stochastic positions $m_j$ and context tokens $c_i$ obtained via $f_{\theta}$. StoP is applied to masked tokens only, leading to features that are more robust to location uncertainties.}
\label{fig:architecture}
\end{figure*}

\section{Masked Image Modeling with~StoP}
\label{sec:stop}
This section presents the StoP formulation, and how to utilize it in MIM while avoiding collapsing back to deterministic positional embeddings. A high-level schematic view of the model is included in Figure~\ref{fig:architecture}, and a pseudo-code implementation is included in Algorithm \ref{alg:fp}.

\textbf{Stochastic Positional Embeddings (StoP)}. Instead of training the model to make predictions in exact locations, we propose to use stochastic positional embeddings which implicitly force the model to make stochastic predictions. This is meant to teach the model that locations cannot be predicted precisely, resulting in improved robustness.

Formulating StoP requires defining the distribution of the stochastic positions, parameterizing it appropriately, and implementing measures to prevent the model from scaling down the noise to the point where it becomes negligible. 

Given a position $j$, we denote by $\hat{\psi}_j$ the random variable providing the position embedding. We assume that $\hat{\psi}_j$ is distributed as Gaussian whose mean is the fixed embedding $\psi_j$, and whose covariance matrix is $\Sigma\in\mathbb{R}^{d_p\times d_p}$:
\begin{equation}
\label{eq:psi}
\hat{\psi}_j \sim N(\psi_j, \Sigma)
\end{equation}
Naturally, we want to learn an optimal $\Sigma$. To parameterize $\Sigma$, we use a general formulation of a low-rank covariance matrix:
\begin{equation}
\Sigma = \sigma AA^{T}\numberthis \label{eqn:sigma}
\end{equation}
Where $A\in\mathbb{R}^{d_{p}\times d_{e}}$ is a learned matrix and $\sigma \in\mathbb{R^+}$ is a positive scalar hyperparameter used to control the Noise to Signal Ratio (NSR).\footnote{At this point, it may seem unnecessary to have an additional $\sigma$ parameter. However, later we will tie $A$ to other model parameters, and thus $\sigma$ will not be redundant and determine the scale of the noise.} By learning the matrix $A$, this formulation allows assigning different noise levels to different location components (e.g., high and low resolution), as well as capturing correlations between location features. 

Using this formulation is challenging for two reasons. First, the sampling process of $\hat{\psi}$ is non-differential w.r.t $A$, and therefore we cannot derive gradients to directly optimize it with SGD. Second, learning might result in the optimization process setting the values of $\Sigma$ to zero, leading to no randomness. Next, we move to solve these issues.

\textbf{Reparametrization Trick.} Since $\hat{\psi}_j$ is sampled from a parameterized distribution, it is non-differentiable in $A$. 
However, a standard trick in these cases is to reparameterize the distribution so that the sampling is from a fixed distribution that does not depend on $A$ (e.g., see \citet{kingma2013auto}). Specifically, we generate samples from $\hat{\psi}_j$ by first sampling a vector $n_j\in\mathbb{R}^{d_e}$ from a standard Gaussian distribution: $n_j \sim N(0, \sigma I)$. Then, $\hat{\psi}_j$ is set to:
\begin{equation}
\label{eqn:reparam}
\hat{\psi}_j = An_j + \psi_j
\vspace{-5pt}
\end{equation}
The resulting distribution of $\hat{\psi}_j$ is equal to that in Equation~\ref{eq:psi}, however, we can now differentiate directly through $A$. 

\textbf{Collapse to deterministic positions (A=0).} Intuitively, adding noise to an objective hurts the training loss, and thus if $A$ appears only in \eqref{eqn:reparam}, training should set it to zero. We indeed observe this empirically, suggesting that $A$ cannot only appear in a single place in the model. In what follows we propose an approach to overcoming this issue.

\begin{algorithm}[t]
  \caption{MIM w/ StoP pseudo-code. requires only a minor implementation change, highlighted in light gray.}
  \label{alg:fp}
  \small
  \begin{algorithmic}[1]
     \State \textbf{Input:} num iterations $K$, image dist $S$, hyperparam $\sigma$, positional embeddings $\psi$
     \State \textbf{Params}: ${A,\Tilde{m}}$, encoder $f_\theta$, predictor $g_\phi$
    \For {$itr=1,2,...,K$}
        \State $I_x \sim S$ 
        \State $p \leftarrow \text{patchify}(I_x)$ 
        \State $(x, B_x),(y, B_y) \leftarrow \text{mask}(p)$ 
        \State $s_x \leftarrow f_{\theta}(x)$
        \State \codecomment{\# apply StoP on a sequence of tokens}
        \State \texttt{\colorbox{lightgrey}{$n_j \sim \mathcal{N}(0, \sigma I$)}}
        \State \codecomment{\# $\psi_{B_x}$, $\psi_{B_y}$ - masked/context positional embeddings}
        \State $m = $ \texttt{\colorbox{lightgrey}{$An$}} $+ \psi_{B_y} + \Tilde{m}$ \State $c = As_x + \psi_{B_x}$
        \State \codecomment{\# predict targets}
        \State $\hat{s}_y \leftarrow g_\phi(c, m)$ 
        \State $ s_y \leftarrow \text{get\_target}(y)$
        \State $\text{loss} \leftarrow L(\hat{s}_y, s_y)$
        \State $\text{sgd\_step}(\text{loss}; \{\theta,\phi, A, \Tilde{m} \})$ 
    \EndFor
  \end{algorithmic}
\end{algorithm}

\textbf{Avoiding collapse by weight tying A=B.} To avoid the collapse to deterministic positions, we propose to tie the weights of $A$ and $B$ (originally defined in Eq.~\ref{eq:mim_context_tokens}), such that the same matrix $A$ projects both the context tokens $s_{x_i}$ and the noise tokens $n_j$: 
\begin{equation}
\label{eq:tie}
c_i = As_{x_i} + \psi_i \quad m_j = An_j + \psi_j + \Tilde{m} 
\end{equation}
This tying means that the scale of the noise and the input are both determined by $A$, and thus the noise cannot be set to zero, without affecting other parts of the model. This can be understood by considering two extreme cases:
\begin{itemize}[itemsep=0.5pt]
\vspace{-2mm}
    \item If $A=0$, there is complete certainty about the positional embeddings but all context is lost ($As_{x_i}=0$).
    \item If $A$ has large magnitude, the context information is preserved but the noise is amplified and camouflages masked tokens positional embeddings ($An_j \gg \psi_j$).
\end{itemize}
 This dual role of $A$ forces the model to trade-off between the positions of the masked tokens and the context tokens.\footnote{Note that an implicit assumption here is that $\psi$ and $s_x$ have fixed magnitude. This is true for sine-cosine features and for $s_x$ which are layer normalized by the transformer last layer.}

In the following proposition, we formally show that if the weights $A$ and $B$ are tied then $A$ cannot collapse. More specifically, $A=0$ occurs only if in the original deterministic setting $B$ goes to zero and doesn't utilize the context anyway. Formally, consider a regression task where $F$ predicts some target $y_j$ given a stochastic position $An_j + \psi_j + \Tilde{m}$ where $n_j \sim N(0, \sigma I)$ and projected context token $Bx_i$. Denote $J_{tied}, J_{det}$ the loss functions when tying the weights $A$ and $B$, and when using deterministic positional embeddings respectively:
\begin{equation*}
J_{tied}(A) = \sum_{i,j} \mathbb{E}_{n_j}[(F(An_j + \psi_j + \Tilde{m}, Ax_i) - y_j)^2]
\end{equation*}
\begin{equation*}
J_{det}(B) = \sum_{i,j} [(F(\psi_j + \Tilde{m}, Bx_i) - y_j)^2]
\end{equation*}

\begin{proposition}
If the weights of $A$ and $B$ are tied (namely $A=B$) then $\left. \frac{dJ_{tied}}{dA} \right|_{A=0} = 0$ iff $\left. \frac{dJ_{det}}{dB} \right|_{B=0} = 0$
\end{proposition}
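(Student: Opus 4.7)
The plan is to Taylor-expand both objectives around the collapsed configuration ($A = 0$ for $J_{tied}$, $B = 0$ for $J_{det}$) and to show that, after taking the expectation over $n_j$, the two leading linear-in-the-parameter contributions coincide. Only terms linear in the variable being differentiated can survive in the gradient at the origin, so this will in fact give the stronger statement that the two gradients are equal as matrices; both directions of the iff then follow immediately.

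Fix indices $i, j$ and let $F_1, F_2$ denote the partial derivatives of $F$ with respect to its first and second arguments, evaluated at the anchor point $(\psi_j + \tilde{m},\,0)$, and write $e_j := F(\psi_j + \tilde{m},\,0) - y_j$. Assuming $F$ is smooth, a first-order Taylor expansion reads
\begin{equation*}
F(An_j + \psi_j + \tilde{m},\,Ax_i) = F(\psi_j + \tilde{m},\,0) + F_1^{\top}(An_j) + F_2^{\top}(Ax_i) + O(\|A\|^{2}).
\end{equation*}
Squaring, subtracting $y_j$, and applying $\mathbb{E}[n_j] = 0$ and $\mathbb{E}[n_j n_j^{\top}] = \sigma I$ to evaluate the Gaussian expectation gives
\begin{equation*}
\mathbb{E}_{n_j}\!\left[(F(An_j + \psi_j + \tilde{m},\,Ax_i) - y_j)^{2}\right] = e_j^{2} + 2 e_j F_2^{\top} A x_i + \sigma\,\|A^{\top}F_1\|^{2} + (F_2^{\top}Ax_i)^{2} + O(\|A\|^{3}).
\end{equation*}
The analogous (non-stochastic) expansion of the $J_{det}$ integrand reads $e_j^{2} + 2 e_j F_2^{\top} B x_i + O(\|B\|^{2})$.

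Differentiating at the origin annihilates every term of order $\|A\|^{2}$ or higher: the Gaussian variance contribution $\sigma\|A^{\top}F_1\|^{2}$ has derivative $2\sigma F_1 F_1^{\top} A$, which vanishes at $A = 0$; the square $(F_2^{\top}Ax_i)^{2}$ vanishes analogously; and the Taylor remainder is still $O(\|A\|)$ after one differentiation. Only the linear piece $2 e_j F_2^{\top} A x_i$ survives, with gradient $2 e_j F_2 x_i^{\top}$. Exactly the same gradient arises from differentiating the $J_{det}$ expansion at $B = 0$. Summing over $i, j$ therefore gives
\begin{equation*}
\left.\frac{dJ_{tied}}{dA}\right|_{A=0} = \sum_{i,j} 2 e_j F_2 x_i^{\top} = \left.\frac{dJ_{det}}{dB}\right|_{B=0},
\end{equation*}
from which the iff is immediate. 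The main subtlety to keep track of is that $n_j$ enters the objective only through the product $An_j$: although $n_j$ itself has magnitude independent of $A$, the stochastic term contributes to the expected loss only via its variance, which is $O(\|A\|^{2})$, and therefore disappears upon differentiating at the origin. This is exactly where the tying assumption $A = B$ does its work, and it is the one spot where one must be careful not to accidentally credit the noise with a spurious linear-in-$A$ contribution to the gradient.
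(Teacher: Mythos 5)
Your proof is correct and follows essentially the same route as the paper's: both arguments reduce to the observation that, at the origin, the noise path contributes to the gradient only through $\mathbb{E}[n_j]=0$ while the context path reproduces the deterministic gradient exactly, so the two gradients coincide as matrices and the iff follows. The paper phrases this as a direct chain-rule computation of $\partial J/\partial A$ evaluated at $A=0$ rather than a Taylor expansion of the expected loss, but the content is identical (minor quibble: your remainder after squaring is $O(\|A\|^2)$ rather than $O(\|A\|^3)$, which is harmless since any $o(\|A\|)$ term has vanishing gradient at the origin).
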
 
Proof is included in Appendix~\ref{sup:proof_collapse}. 

\textbf{Optimal Predictor}. Our approach relies on using stochastic positional embeddings. Here we provide further analysis, showing that the optimal predictor performs spatial smoothing. Consider a random variable $X$ (corresponding to the context in our case. For simplicity assume $X$ is just the positional embedding of the context) that is used to predict a variable $Y$ (corresponding to the target in our case). But now instead of predicting from $X$, we use a noise variable $Z$
that is independent of both $X,Y$, and provide the predictor with only the noisy result $R = g(X,Z)$. Here $g$ is some mixing function (in our case $g(x,z) = x+z$). We next derive the optimal predictor $f(R)$ in this case. Formally we want to minimize:
\begin{equation}
\label{eq:optimal_objective}
    E_{R,Y}[(f(R) - Y)^2]
\end{equation}
\begin{proposition}
If $Z$ is a Gaussian with zero mean and unit variance, the optimal predictor that minimizes Equation~\ref{eq:optimal_objective} is: 
$$f(r) = \int_x E[Y|X=x]\frac{1}{\sqrt{2\pi}}e^{-0.5(x-r)^2}dx $$
\end{proposition}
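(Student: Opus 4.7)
The plan is to combine two standard facts: the MSE-optimal predictor is the conditional expectation, and a tower-property computation together with conditional independence rewrites $E[Y \mid R]$ as an integral of $E[Y \mid X=x]$ weighted by a Gaussian kernel.

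First, I would invoke the textbook fact that for any square-integrable $Y$ and any random observation $R$, the minimizer of $E[(f(R)-Y)^2]$ over measurable $f$ is $f^\star(r) = E[Y \mid R=r]$ (this follows from orthogonality of the conditional expectation: any other $f$ differs from $f^\star$ by a term orthogonal to $f^\star(R)-Y$ in $L^2$). So the problem reduces to computing $E[Y \mid R=r]$ in closed form.

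Next, since $Z$ is independent of the pair $(X,Y)$ and $R = X+Z$, the conditional independence $Y \perp R \mid X$ holds: given $X=x$, the variable $R = x+Z$ is a function of $Z$ alone, which is independent of $Y$. Using the tower property and this conditional independence,
\begin{equation*}
E[Y \mid R=r] \;=\; E\!\bigl[E[Y \mid X, R] \,\big|\, R=r\bigr] \;=\; E\!\bigl[E[Y \mid X] \,\big|\, R=r\bigr] \;=\; \int E[Y \mid X=x]\, p(x \mid r)\, dx.
\end{equation*}
By Bayes' rule, $p(x\mid r) = p(r\mid x)\,p(x)/p(r)$, and since $R - X = Z \sim N(0,1)$ we have $p(r\mid x) = \tfrac{1}{\sqrt{2\pi}}\exp(-\tfrac12(x-r)^2)$. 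Plugging in yields exactly the claimed spatial-smoothing formula, with the Gaussian kernel arising as the density of the noise.

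The only subtle point is that the stated formula drops the factor $p(x)/p(r)$, which is correct precisely when $X$ has a flat (uniform) prior over positions — a natural modeling choice for positional embeddings with no preferred location; otherwise one should restate the conclusion as $E[Y\mid R=r] \propto \int E[Y\mid X=x]\,\phi(x-r)\,p(x)\,dx$. Apart from acknowledging this normalization convention, the remaining steps (conditional expectation as $L^2$-projector, tower property, and substitution of the standard Gaussian density) are routine, so I do not expect a calculation-heavy obstacle.
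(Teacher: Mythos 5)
Your proposal follows essentially the same route as the paper's proof in Appendix~\ref{sup:optimal}: reduce to the conditional expectation $E[Y\mid R=r]$, apply the tower property with the conditional independence $Y\perp R\mid X$ to get $\int E[Y\mid X=x]\,p(x\mid r)\,dx$, and identify $p(x\mid r)$ with the Gaussian noise density. Your additional remark that the final identification silently assumes a flat prior on $X$ (otherwise $p(x\mid r)\propto \phi(x-r)p(x)$) is a point the paper itself glosses over when it asserts ``$p(x|r)$ is also Gaussian with expectation $r$,'' so you are, if anything, slightly more careful than the original.
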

Thus, the optimal predictor amounts to a convolution of the clean expected values with a Gaussian. See Appendix~\ref{sup:optimal} for the proof.

\section{Experiments and Results}
Next, we turn to discuss the main experiments presented in the paper. In Section~\ref{subsec:exp:downstream}, we describe the application of StoP to various downstream tasks including image recognition, dense prediction, and low-level vision tasks. In Section~\ref{subsec:exp:ablation} we  discuss the ablation study and design choices. The full implementation details are included in Appendix~\ref{app:experiments}.

\subsection{Downstream Tasks}
\label{subsec:exp:downstream}
We conducted pre-training of StoP on top of I-JEPA, which is a state-of-the-art MIM model. We train on IN-1k for a period of $600$ epochs using  ViT-B/16 and ViT-L/16 architectures for the encoder and predictor or for $300$ epochs when using ViT-H/14. Subsequently, we proceeded to evaluate the model's performance on a variety of downstream tasks. Additional results and comparison to invariance-based approaches are included Appendix~\ref{app:downstream}.
\begin{table*}
\centering
\begin{tabular}{llccc}
        \bf\small Arch & \bf\small Method & \bf\small 1\%, last layer & \bf\small 100\%, last layer & \bf\small 100\%, last 4 layers\\
        \toprule
        \multirow{2}{*}{\small ViT-B/16} & \small I-JEPA & \small   57.1 & 70.9  & 72.9 \\
        & \small +StoP & \cc\small 60.3 (+3.2\%) & \cc 72.6 (+1.7\%) & \cc  74.5 (+1.6\%)\\
        \midrule
        \multirow{2}{*}{\small ViT-L/16} &\small I-JEPA & \small  64.2 & 76.1 & 77.5 \\
        & \small +StoP & \cc\small 65.1 (+0.9\%) & \cc 77.1 (+1.0\%) & \cc 78.5 (+1.0\%)\\
        \midrule
        \multirow{2}{*}{\small ViT-H/14} & \small I-JEPA & \small 62.9  & 78.2 & 79.3 \\
        & \small +StoP & \cc\small 65.4 (+2.5\%)  & \cc79.0 (+0.8\%) & \cc79.6 (+0.3\%) \\
        \bottomrule
  \end{tabular}
    \caption{\small{\bf StoP compared to deterministic sinusoidal positional embeddings on IN-1k}. StoP leads to consistent linear probing improvement in all settings. When applying linear probing on a trained ViT-H model with StoP, using only $1\%$ of the labeled data and using averaged pooled features from the last layer, StoP results in an +2.5\% improvement. The baseline I-JEPA uses sinusoidal positional embeddings.}
    \label{tb:strict_lin}
\end{table*}

\textbf{Image recognition.} For image classification, we perform a linear probing evaluation of StoP on multiple datasets, including ImageNet (IN-1k)~\citep{russakovsky2015imagenet}, Places 205~\citep{places205}, iNaturalist 2018~\citep{van2018inaturalist}, and CIFAR 100~\citep{cifar10}. These datasets vary in their size, their purpose, and the geographical environments from which the images were captured. For example, IN-1k contains over $1.2$ million images compared to CIFAR-100 which contains only $60,000$ images, and while IN-1k is focused on object recognition, iNaturalist and Places are focused on scene and species recognition.

\begin{table}[t]
\centering
\small
\begin{tabular}{lllc}
        \bf\small Method & \bf\small Arch. & \bf\small Epochs & \bf\small Top-1\\
        \toprule
        data2vec & \small ViT-L/16 & 1600 & 77.3 \\[1ex]
        \multirow{2}{*}{\small MAE} & \small ViT-B/16 & 1600 & 68.0\\
        & \small ViT-L/16 & 1600 & 75.8\\
        & \small ViT-H/14 & 1600 & 76.6\\
        [1ex]
        \multirow{2}{*}{\small I-JEPA} 
        & \small ViT-B/16 & 600 & {70.9}\\
        & \small ViT-L/16 & 600  & {76.1}\\ 
        & \small ViT-H/14 &  300 & {78.2}\\[1ex]
        \multirow{2}{*}{+StoP (ours)} 
        & \cc\small ViT-B/16 & \cc 600 & \cc {72.6}\\
        & \cc\small ViT-L/16 & \cc 600  & \cc{77.1}\\
        & \cc\small ViT-H/14 &  \cc 300 & \cc\bf{79.0}\\
        \bottomrule
    \end{tabular}
    \caption{\small{\bf Linear-evaluation on IN-1k}. Replacing sinusoidal positional embeddings with StoP in I-JEPA significantly improves linear probing results.}
    \label{tb:lineareval}
    \vspace{-2mm}
\end{table}
\begin{table}[t]
    \centering
    \small
    \begin{tabular}{llccc}
        \bf\small Method & \bf\small Arch. & \bf\small J-Mean & \bf\small F-Mean & \bf\small J\&F Mean\\
        \toprule
        \multirow{3}{*}{\small MAE} & \small ViT-B/16 & 49.4 & 52.6 & 50.9 \\
        & \small ViT-L/16 & 52.5 & 54.3 & 53.4\\
        & \small ViT-H/14 & 54.0 & 57.0 & 55.5 \\ [1ex]
        \multirow{2}{*}{\small I-JEPA} & \small ViT-B/16 &  56.1 & 56.2 & 56.1 \\
        & \small ViT-L/16 & 56.1 & 55.7 & 55.9 \\ 
        & \small ViT-H/14 & 58.5 & 60.9 & 59.7 \\ [1ex]
        \multirow{3}{*}{\small +StoP} & \cc\small ViT-B/16 & \cc 56.6 & \cc 57.3 & \cc 57.0 \\
        & \cc\small ViT-L/16 & \cc 58.1 & \cc 58.7 & \cc 58.4\\
        & \cc\small ViT-H/14 & \cc \bf 58.9 & \cc \bf 61.2 & \cc \bf 60.1\\
        \bottomrule
\end{tabular}
    \caption{\small{\textbf{Video objects semi-supervised segmentation.} MIM with StoP learns features with a finer level of granularity. Results are reported on DAVIS 2017 dataset. }}
  \label{tb:labelprop}
\end{table}

In Table~\ref{tb:strict_lin}, we present the linear probing image classification results conducted on IN-1k under different linear evaluation protocols using different amounts of data, and by aggregating features from different layers. E.g, ``100\%, last 4 layers'' applies linear probing on the entire IN-1k data and the representation of each image is comprised of a concatenation of four feature vectors, each one summarizes information from its corresponding layer via average pooling. In Table~\ref{tb:lineareval} we compare linear probing results of common MIM methods on IN-1k, reporting past published performance. In Table~\ref{tb:lineareval} all perform linear probing over the output from the last layer.

StoP improves the baseline performance using all architectures examined. For example, $+2.5\%$ linear probing performance gains with ViT-H using $1\%$ of the labeled data and $1.6\%$ when using features from the last $4$ layers using ViT-B on the full IN-1k data. Furthermore, using~StoP leads to improvements in downstream linear probing tasks (see Table~\ref{tb:transfer-classification}). For example, StoP leads to $3.3\%$ improvement on iNAT using ViT-H and 1.3\% on counting. This confirms that the learned representations lead to improvements in a large variety of image recognition tasks. On full finetuning using 1\% of the labeled data, we observe similar performance improvements (see Table~\ref{tab:finetune}), e.g, $+2.3\%$ improvements on Top-1 accuracy using ViT-L model. We provide the full finetuning results in Table~\ref{supp:tab:finetune}, Appendix~\ref{app:downstream}.

\begin{table*}
    \centering
    \small
    \begin{tabular}{llcccccc}
            \bf\small Method & \bf\small Arch. & \bf\small CIFAR100 & \bf\small Places205 &  \bf\small iNat18 & \bf\small CLEVR/Count & \bf\small CLEVR/Dist  \\
        \toprule
        \small data2vec & \small ViT-L/16 & 81.6 & 54.6 & 28.1 & 85.3 & 71.3 \\[1ex]
        \multirow{3}{*}{\small MAE} & \small ViT-B/16 & 68.1 & 49.2 & 26.8 & 86.6 & 70.8 \\
         & \small ViT-L/16 & 77.4 & 54.4 & 33.0 & \textbf{92.1} & 73.0 \\
          & \small ViT-H/14 & 77.3 & 55.0 & 32.9 & 90.5 & 72.4 \\
         [1ex]
        \multirow{3}{*}{\small I-JEPA} & \small ViT-B/16 & 69.2 & 53.4 & 43.4 & 82.2 & 70.7 \\
        & \small ViT-L/16 & 83.6 & 56.5 & 48.4 & 85.6 & 71.2\\
        & \small ViT-H/14 & 87.5 & 58.4 & 47.6 & 86.7 & 72.4 \\[1ex]
        \multirow{3}{*}{\small +StoP} & \small \cc ViT-B/16 & \cc 81.2 & \cc 54.3 & \cc 44.7 & \cc 83.7 & \cc 71.3\\
        & \small \cc ViT-L/16 & \cc 84.7 & \cc 57.2 & \cc 49.2 & \cc 85.7 & \cc 70.2 \\
        & \small \cc ViT-H/14 & \cc \bf 87.7 & \cc \bf \textbf{58.4} & \cc \bf 50.9 & \cc 88.0 & \cc \bf 72.5 \\
        \bottomrule
    \end{tabular}
    \caption{\small
    \textbf{Linear-probe transfer for various downstream tasks}. Linear-evaluation on downstream image classification, object counting, and depth ordering tasks. Using StoP instead of sinusoidal deterministic positions leads to improvements on all tasks. E.g, $+3.3\%$ on iNAT18 and $+1.3\%$ on Counting.}
    \label{tb:transfer-classification}
\end{table*}

\textbf{Counting and depth ordering.} We assess the downstream performance on tasks that require fine-grained objects representations like counting and depth ordering using the CLEVR~\cite{johnson2017clevr} dataset. Table~\ref{tb:transfer-classification} provides evidence that using StoP significantly improve counting ($+1.3\%$) and slightly improve depth ordering ($+0.1\%$).

\textbf{Dense prediction.} To evaluate how well StoP performs on dense prediction tasks, e.g, tasks that require fine-grained spatial representations, we utilized the learned models for semi-supervised video object segmentation on the DAVIS 2017~\citep{pont20172017} dataset. We follow previous works (e.g~\citet{jabri2020space, caron2021emerging}) and use the pretrained model to extract frames features and use patch-level affinities between frames to track the first segmentation mask. We include video semi-supervised video-object segmentation by tracking results in Table~\ref{tb:labelprop}.  We find that StoP significantly improves over I-JEPA with deterministic sinusoidal location features. For example, we observe an improvement of $+2.5\%$ in $J\&F$ using ViT-L.

\subsection{Ablation Study} 
\label{subsec:exp:ablation}
\begin{table}[t]
  \centering
  \begin{tabular}{llc}
        \bf\small Method & \bf\small Epochs & \bf\small Top-1\\
        \toprule
        \multirow{1}{*}{\small Sine Cosine} 
        & 600  & {69.4}\\ 
        \multirow{1}{*}{StoP (ours)} 
        & \cc 600  & \cc{\textbf{71.7}}\\
        \bottomrule
    \end{tabular}
  \caption{\small{\textbf{Finetuning results over IN-1k with 1\% labels.} StoP significantly improves finetuning performance compared to using sine-cosine positional embeddings. Using ViT-L/16 architecture.}}
    \label{tab:finetune}
\end{table}

\begin{figure}[t]
    \centering
    \includegraphics[width=1\linewidth]{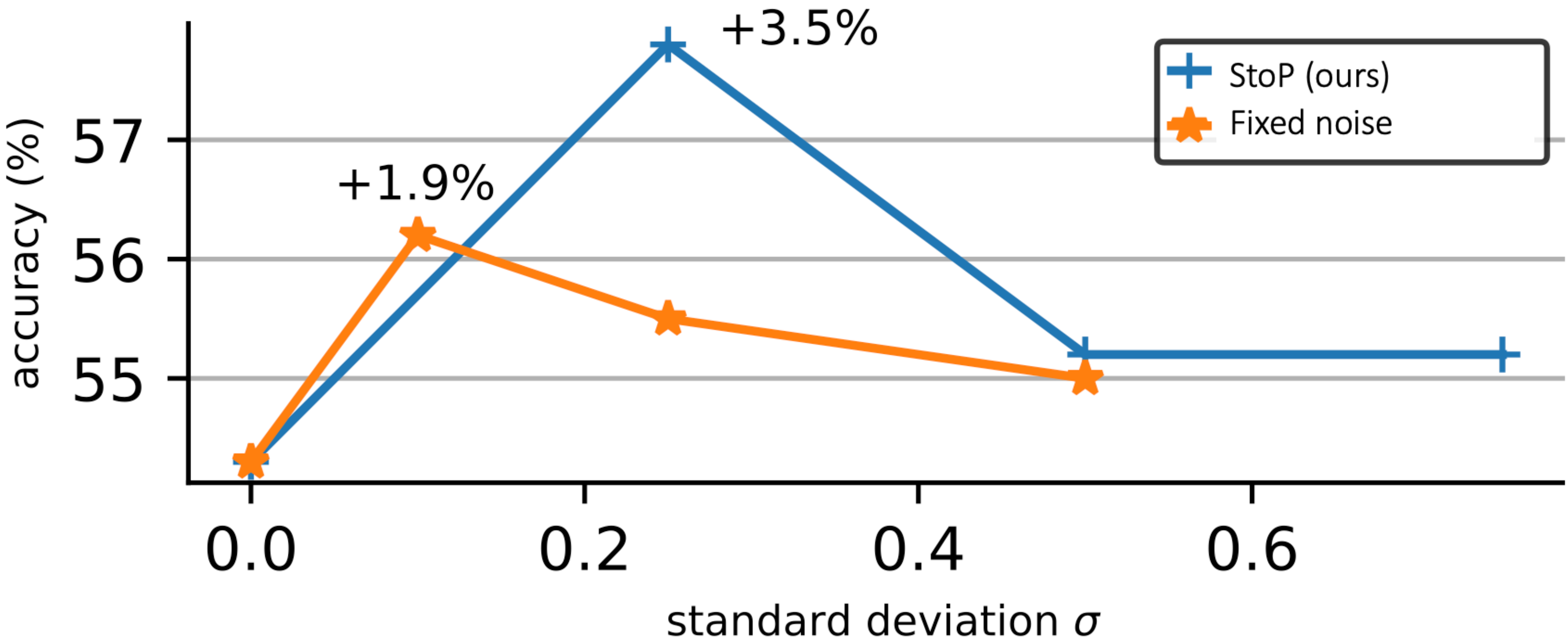}
    \caption{\small\textbf{Learned vs. predefined stochastic positions.} Using the learned covariance matrix as in StoP, e.g, $\Sigma=\sigma AA^T$ leads to $+3.5\%$ improvement compared to smaller gains with a fixed covariance matrix $\Sigma=\sigma I$. Accuracy is reported based on linear probing evaluation using 1\% of the data from IN-1k.
}
\label{fig:noise}
\end{figure}
\begin{table}[t]
\vspace{-4mm}
    \centering
        \begin{tabular}{l c}
        \bf\small Method & \bf\small Top-1 \\
        \toprule
        \small Sine Cosine & 54.3\\    
        \small Learned Pos. Embedding & 54.4 \\
        \small Stochastic Positions (StoP) &   \cc\bf 57.8 \\
    \end{tabular}
    \captionof{table}{
\small\textbf{Different positional embeddings}.
Linear probing on IN-1K using only 1\% of the labels. Stochastic Positions (StoP) outperforms other common deterministic variants by $3.3\%$.
}    
  \label{tb:diff_pos}
\end{table}

Our primary focus is to evaluate the effectiveness of StoP. To demonstrate this, we assess various design options using ViT-B architecture for the encoder and predictor. We pre-train for $300$ epochs on IN-1k based on the I-JEPA~\citep{assran2023self} MIM model. We then assessed the linear probing performance on IN-1k using only 1\% of the labels. 

\textbf{StoP compared to deterministic positional embeddings.} The most common choices for positional embeddings for Vision Transformers are sine-cosine location features (also used in MAE, I-JEPA) and learned positional embedding. We evaluate the MIM downstream performance using each of these options and using~StoP (see Table~\ref{tb:diff_pos}). The results indicate that using StoP improves the performance by $+3.2\%$ compared to sinusoidal and learned positional embeddings.

\textbf{Learned vs. predefined covariance matrix.} To confirm that learning the covariance matrix $\Sigma = \sigma AA^T$ (and specifically $A$) is beneficial compared to using a predefined covariance matrix, we compare to stochastic positional embeddings with a predefined covariance matrix $\Sigma=\sigma I$, without any learning. We compare both options using different $\sigma$ hyperparameter values. Figure~\ref{fig:noise} indicates that it is advantageous to learn $\Sigma$ rather than use fixed parameters. Our findings show that setting the hyperparameter value to $\sigma=0.25$ leads to an improvement of $3.5\%$ points compared to deterministic positional embeddings ($\sigma=0$).

\textbf{Application of StoP to different tokens.} 
We apply StoP to context and/or masked tokens. The results in Table~\ref{tb:ablation-context} confirm our design choice, showing that StoP is most beneficial when it is applied solely to masked tokens, compared to context tokens, or both masked and context tokens.

\begin{figure}[t]
\centering
\includegraphics[width=0.85\linewidth]{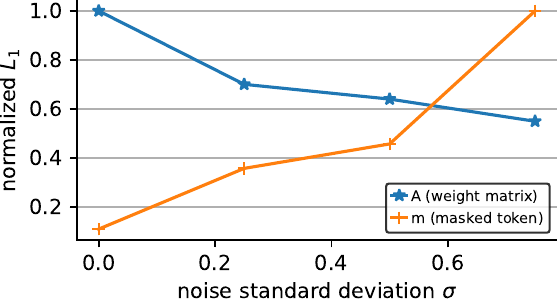}
\caption{\small\textbf{Increasing $\sigma$ induces regularization.} Changing the prior $\sigma$ (where $\Sigma=\sigma AA^T$) induces regularization over $A$ and increases the norm of the masked token, which preserves the masked token information in comparison to the added noise.} 
\label{fig:reg}
\end{figure}

\begin{table}[t]
    \centering
    \begin{tabular}{l c}
        \bf\small Method & \bf\small Top-1 \\
        \toprule
        \small No Noise (Sine Cosine) & 54.3\\
        \small Context tokens only &   55.1 \\
        \small Masked + context tokens & 56.8\\
        \small Masked tokens only & \cc \bf 57.8 \\
    \end{tabular}
    \captionof{table}{\small\textbf{Applying noise to different tokens}. Applying learned noise to context and/or masked tokens positional embeddings (sine-cosine). Reporting linear evaluation accuracy (using 1\% of IN-1k).}
    \label{tb:ablation-context}
\end{table}

\subsection{Analysis}
To explain how StoP affects MIM, we analyze the learned model weights, visualize the stochastic positional embeddings, and visualize the predicted features.

\textbf{StoP induces regularization.} The matrix $A$ is used to project both noise tokens and context embedding tokens. We hypothesize that StoP implicitly regularizes $A$. To test this hypothesis we train models using StoP changing only the hyperparam $\sigma$ (see Figure~\ref{fig:reg}). We find that increasing the value of $\sigma$ leads to a decrease in the norm of $A$, which can be viewed as regularization. On the other hand, increasing $\sigma$ leads to an increase in the norm of the masked token bias $\Tilde{m}$. We speculate that the masked token bias increases in scale to prevent losing its information relative to the noise.

To further analyze this phenomenon, we train additional models while applying $l_1$ or $l_2$ regularization on $A$ while keeping the positional embeddings of masked tokens deterministic. We find that StoP leads to +$2\%$ improvement over $l_1$ and +$2.1\%$ over $l_2$ regualrization. Therefore, we conclude that StoP is superior to simple regularization.

\textbf{Stochastic positional embedding visualization.} 
\begin{table}
  \centering
    \centering
    \begin{tabular}{l c}
        \bf\small Method & \bf\small Top-1 \\
        \toprule
        \small Sine Cosine & 54.3\\    
        \small x2 Low res (bilinear resize) & 52.1 \\
        \small x2 Low res (max pooling) & 54.1 \\
        \small Stochastic Positions (StoP) &   \cc\bf 57.8 \\
    \end{tabular}
    \captionof{table}{\small\textbf{Low resolution prediction}.
Performance of StoP compared to models that predict features on lower scales via max pooling or bilinear resizing. Reporting linear evaluation accuracy (using 1\% of IN-1k). StoP performs better than low res prediction.}
\label{tb:low_res}
\vspace{-2mm}
\end{table}
To visualize how StoP affects the similarity between different positions, we plot the similarity matrix between a stochastic position embedding query and the predefined sine-cosine deterministic positions (Figure~\ref{fig:similarity}). With StoP, we find that query locations are more similar to a wider range of neighboring locations. Building on this observation, we train models to investigate if directly predicting lower-scale features is beneficial. We trained models to predict features in both the original scale and a downscaled version by a factor of 2, using bilinear resizing and max pooling for downscaling. However, we found that predicting lower scale features does not improve performance (see Table~\ref{tb:low_res}).
\begin{figure}[t]
\centering
\includegraphics[width=0.95\columnwidth]{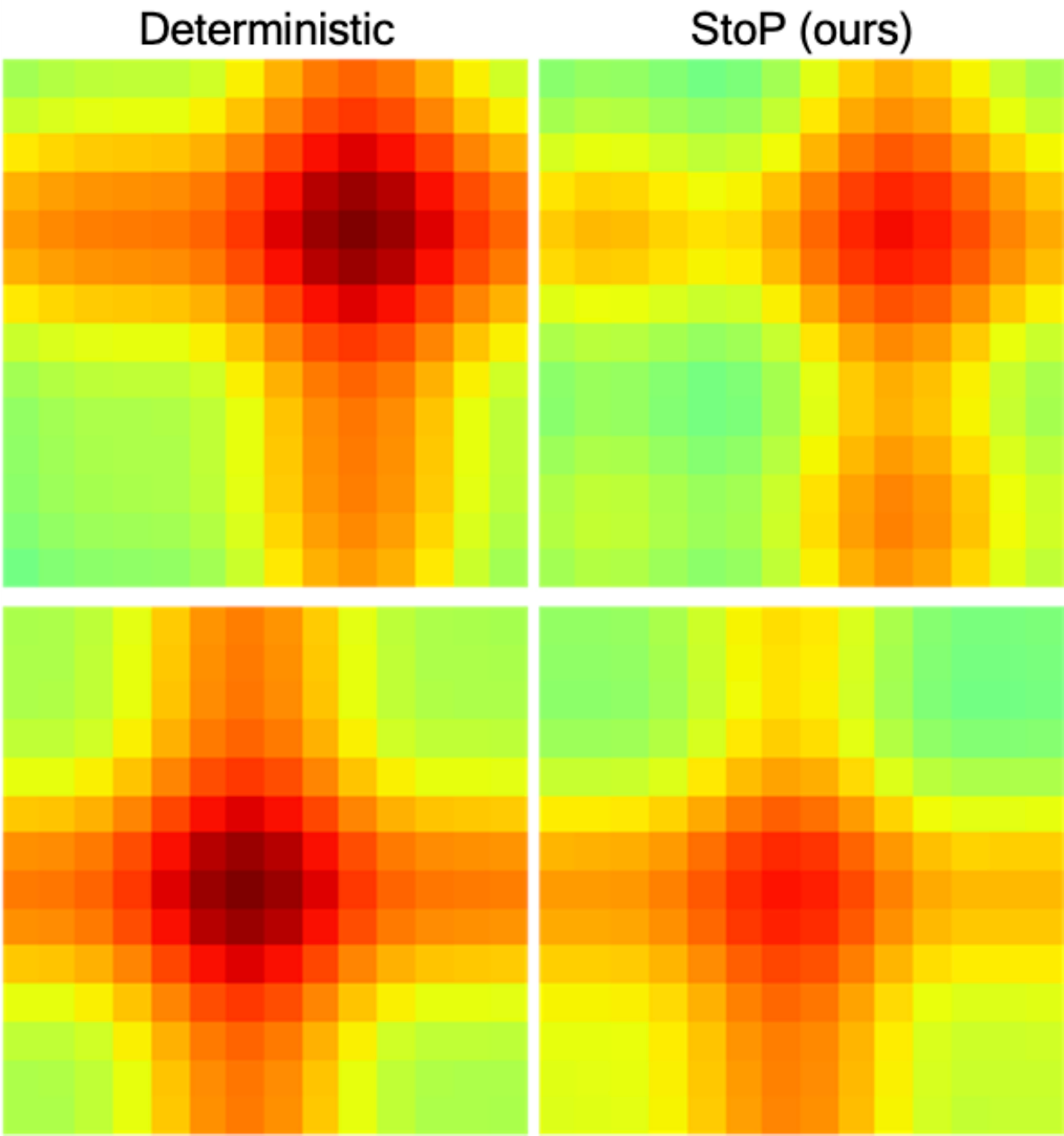}
\caption{\small \textbf{Similarity matrices of deterministic and stochastic positional embedding (StoP) to a query position}. Each row represents the similarity given a different query position. StoP leads to a spatially smooth similarity matrix, thereby making it hard to distinguish the exact location of a given patch.
}
\label{fig:similarity}
\end{figure}
\begin{figure}[t]
\centering
\includegraphics[width=0.95\linewidth]{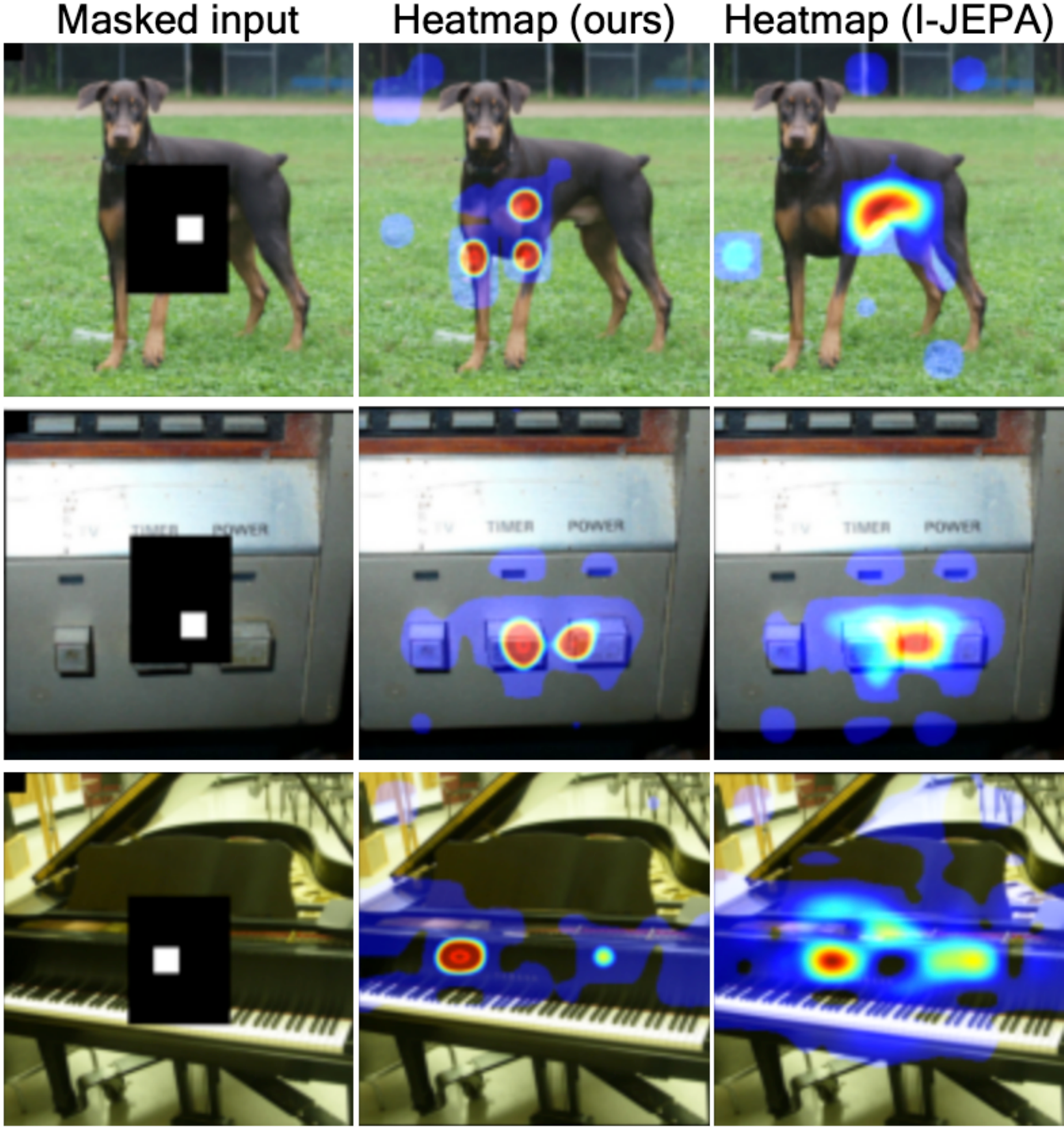}
\captionof{figure}{\small\textbf{Feature visualization}. We plot the similarity between the predicted features of a given patch (marked in white within the masked black area) and other features in the same image. Using StoP produces features that are less location based compared to I-JEPA baseline that have strong correlation with the target location.}
\label{fig:viz_encoder}
\vspace{-3mm}
\end{figure}

\textbf{Prediction visualization.} We include heatmap visualization to visualize the similarity of a predicted token to all other tokens within the same image (see Figure~\ref{fig:viz_encoder}). For a given image, mask, and a masked patch of interest, we apply cosine similarity between the predicted patch and all other token representations within the same image, followed by a softmax. For I-JEPA with sine-cosine positional embeddings, the visualization indicates that adjacent tokens tend to share similar features, implying a correlation between the features and spatial location. In contrast, StoP produces predictions correlated with non-neighboring small areas. We speculate that using StoP leads to learning features that are more semantic and prevents overfitting to location features.
\label{sec:analysis}
\section{Related Work}
\textbf{Masked image modeling (MIM).} There is a significant body of research exploring visual representation learning by predicting corrupted sensory inputs. Denoising autoencoders~\citep{vincent2010stacked}, for example, use random noise as input corruption, while context encoders~\citep{pathak2016context} regress an entire image region based on its surrounding. The idea behind masked image modeling~\citep{he2021masked, xie2021simmim, bao2021beit} has emerged as a way to address image denoising. In this approach, a Vision Transformer~\citep{dosovitskiy2020image} is used to reconstruct missing input patches. The Masked Autoencoders (MAE) architecture~\citep{he2021masked}, for example, efficiently reconstructs missing patches in pixel space and achieves strong performance on large labeled datasets. Other approaches, such as BEiT~\citep{bao2021beit}, predict a latent code obtained using a pretrained tokenizer. However, pixel-level pre-training has been shown to outperform BEiT in fine-tuning. SimMiM~\citep{xie2021simmim} explores simple reconstruction targets like color clusters but shows no significant advantages over pixel space reconstruction. Recently, Image-JEPA (I-JEPA)~\citep{assran2023self,lecun2022path} was proposed as a non-generative approach for self-supervised learning of semantic image representations. I-JEPA predicts the representations of various target blocks in an image from a single context block to guide it toward producing semantic representations. Our approach builds on this line of work and we propose to deal with location uncertainty using stochastic positional embeddings which was not explored before.

\textbf{Positional Embeddings in Transformers.} One of the core components of the Transformer architecture~\citep{vaswani2017attention} is the Self-Attention block, which is a permutation invariant function, e.g, changing the order of the input tokens does not change the function output. Consequently, it is necessary to feed input tokens together with their positional embedding to describe their location. Absolute positional embeddings like fixed 2D sinusoidal features~\citep{bello2019attention} or learned location features are the prevalent type of positional embeddings for the Vision Transformer~\citep{dosovitskiy2020image}. Relative positional embeddings have recently gained popularity in NLP due to their ability to address the gap between the training and testing sequence length~\citep{su2021roformer,chu2021conditional,press2021train}. For example, ~\cite{press2021train} proposed ALiBi to bias self-attention to assign higher confidence to neighboring locations, and SPE~\citep{pmlr-v139-liutkus21a} proposed a stochastic approximation for relative positional embedding in linear transformers. Differently, we propose StoP to tackle location uncertainties in MIM, and it can be easily applied on top of any existing deterministic variant.

\textbf{Invariance-based methods.} These methods incorporate a loss that encourages similarity between augmented views of the the same image while avoiding a trivial solution. For example, contrastive learning prevents collapse by introducing negative examples~\citep{Hadsell2006DimensionalityRB, Dosovitskiy2014DiscriminativeUF, chen2020simple, he2019moco, chen2020mocov2, dwibedi2021little}. This can be achieved using a memory bank of previous instances \citep{wu2018unsupervised, oord2018representation, Tian2019ContrastiveMC, misra2020self}. However, there are also non-contrastive solutions that have been proposed. Of particular interest, a momentum encoder has been shown to prevent collapse even without negative pairs~\citep{grill2020bootstrap, caron2021emerging, salakhutdinov2007learning}. Other methods include stopping the gradient to one branch~\citep{chen2021exploring} or applying regularization using batch statistics~\citep{zbontar2021barlow, bardes2021vicreg, bardes2022vicregl, Ermolov2020WhiteningFS, Hua_2021_ICCV}. MoCo~v3~\cite{chen2021empirical}, then DINO~\citep{caron2021emerging} extended these approaches for Vision Transformer, and iBOT~\citep{zhou2021ibotyes} proposed to add a MIM loss to DINO. These approaches perform extremely well on ImageNet linear-probing, yet they rely on batch statistics, struggle under non-uniform distributions~\citep{assran2022hidden}, and require hand-crafted image augmentations~\citep{xiaoshould}. Our approach is based on MIM that requires less assumptions on batch statistics or handcrafted invariances.\label{sec:ijepa}

\section{Limitations}
We applied StoP to I-JEPA which performs image reconstruction in the feature space. However, our attempts to apply StoP to MIM that use pixel based reconstruction, mainly MAE, were not successful. We speculate that adding StoP to MAE might make pixel reconstruction too difficult to achieve. Additionally, StoP tackles location uncertainty but not appearance uncertainty, which we believe is implicitly modeled by reconstructing tokens in feature space. Also, when modeling stochastic positions it may might be possible to condition the noise on the input image, namely the context tokens. We leave this extension for future work. Lastly, while combining StoP with MIM shows significant improvements, invariance-based approaches still perform slightly better (e.g, iBOT, DINO) than MIM approaches.

\section{Conclusion}
In this work, we proposed to use stochastic positional embedding (StoP) to tackle location uncertainty in MIM. By conditioning on stochastic masked token positions, our model learns features that are more robust to location uncertainty. The effectiveness of this approach is demonstrated on various datasets and downstream tasks, outperforming existing MIM methods and highlighting its potential for self-supervised learning. Based on our experiments and visualizations, modeling location uncertainties with StoP reduces overfitting to location features. 

\bibliography{example_paper}
\bibliographystyle{icml2024}
\newpage

\appendix
\onecolumn
\section*{Appendix}
\vspace{5mm}
\section{Noise collapse and weight tying}
\label{sup:proof_collapse}
Consider the following loss function where $n_j \sim N(0, \sigma I)$.: 
\begin{equation}
J = \Sigma_{i,j} \mathbb{E}_{n_j}[(F(An_j + \psi_j + \Tilde{m}, Bx_i) - y_j)^2]
\end{equation}
\begin{proposition}
If $A,B$ are different set of parameters then $\left. \frac{dJ}{dA} \right|_{A=0} = 0$
\end{proposition}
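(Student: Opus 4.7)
The plan is to differentiate $J$ with respect to $A$ using the chain rule, evaluate at $A=0$, and observe that the resulting expression is linear in $n_j$ so that the zero mean of the Gaussian kills it. The key structural observation is that $A$ enters $J$ only through the first argument of $F$ via the term $An_j$, and at $A=0$ this argument degenerates to the deterministic value $\psi_j + \tilde{m}$, so the only surviving stochasticity comes from the factor $n_j$ produced by the chain rule.

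Concretely, I would first assume sufficient regularity on $F$ (continuous differentiability together with a standard dominated-convergence style bound) so that the gradient and expectation may be exchanged; this is a mild condition and is implicitly used throughout the paper. Let $u_j(A) = An_j + \psi_j + \tilde{m}$ and let $\nabla_1 F$ denote the gradient of $F$ in its first slot. Swapping derivative and expectation gives
\begin{equation*}
\frac{dJ}{dA} \;=\; \sum_{i,j} \mathbb{E}_{n_j}\!\left[\, 2\bigl(F(u_j(A),Bx_i)-y_j\bigr)\, \nabla_1 F(u_j(A),Bx_i)\, n_j^{\!\top}\,\right].
\end{equation*}
Now evaluate at $A=0$. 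Then $u_j(0) = \psi_j + \tilde{m}$ is constant in $n_j$, so both the residual $F(\psi_j+\tilde{m},Bx_i) - y_j$ and the Jacobian factor $\nabla_1 F(\psi_j+\tilde{m},Bx_i)$ pull outside the expectation, leaving
\begin{equation*}
\left.\frac{dJ}{dA}\right|_{A=0} \;=\; \sum_{i,j} 2\bigl(F(\psi_j+\tilde{m},Bx_i)-y_j\bigr)\, \nabla_1 F(\psi_j+\tilde{m},Bx_i)\, \mathbb{E}_{n_j}[n_j^{\!\top}].
\end{equation*}
Since $n_j \sim N(0, \sigma I)$ we have $\mathbb{E}[n_j] = 0$, so every summand vanishes and the gradient is zero, as claimed.

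There is no real obstacle here; the only subtlety worth flagging is the exchange of derivative and expectation, which can be justified either by assuming $F$ and $\nabla_1 F$ grow at most polynomially in $u$ (so the Gaussian tail dominates) or, in the common case that $F$ is implemented by a Lipschitz network, by direct application of dominated convergence. The proposition then provides the contrast with the tied setting in the main text: without tying, the symmetry of the Gaussian noise makes $A=0$ a stationary point of $J$, which is what drives the empirically observed collapse; tying $A=B$ breaks this symmetry by forcing $A$ to simultaneously carry the context signal, removing the stationarity at zero except in the degenerate case identified in the main proposition.
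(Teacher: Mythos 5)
Your proof is correct and follows essentially the same route as the paper's: apply the chain rule so that the gradient carries a factor of $n_j^{\top}$, note that at $A=0$ the remaining factors are deterministic in $n_j$, and conclude via $\mathbb{E}[n_j]=0$. Your additional care about interchanging differentiation and expectation is a welcome refinement the paper omits, but it does not change the argument.
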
 

\begin{proof}

\begin{align*}
\frac{\partial J}{\partial A} &= \sum_{i,j} \mathbb{E}_{n_j} [\frac{\partial}{\partial A} \|F(An_j + \psi_j + \Tilde{m}, Bx_i) - y_j\|^2] \\
&= \sum_{i,j} \mathbb{E}_{n_j} [2(F(An_j+\psi_j + \Tilde{m}, Bx_i) - y_j)\frac{\partial F(An_j+\psi_j + \Tilde{m}, Bx_i)}{\partial (An_j + \psi_j + \Tilde{m})} n^T_j]
\end{align*}

Set $A=0$, then derivative becomes:

\begin{align*}
\frac{\partial J}{\partial A}\Big|_{A=0} &= 2\sum_{i,j} (F(\psi_j + \Tilde{m}, Bx_i) - y_j)\frac{\partial F(\psi_j + \Tilde{m}, Bx_i)}{\partial (\psi_j + \Tilde{m})}\mathbb{E}_{n_j}[{n^T_j}] = 0
\end{align*}

\end{proof}

Define the following the loss with weight tying and the deterministic loss without noise: 

\begin{equation}
J_{tied}(A) = J(A,A) = \sum_{i,j} \mathbb{E}_{n_j}[(F(An_j + \psi_j + \Tilde{m}, Ax_i) - y_j)^2] \\
\end{equation}
\begin{equation}
J_{det}(B) = J(A=0,B) = \sum_{i,j} [(F(\psi_j + \Tilde{m}, Bx_i) - y_j)^2]
\end{equation}

\begin{proposition}
If $\left. \frac{dJ_{tied}}{dA} \right|_{A=0} = 0$ iff $\left. \frac{dJ_{det}(B)}{dB} \right|_{B=0} = 0$
\end{proposition}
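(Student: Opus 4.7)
The plan is to compute $\left.\frac{dJ_{tied}}{dA}\right|_{A=0}$ directly via the chain rule, exploiting that $A$ now appears in two places inside $F$, and show that at $A=0$ only the contribution coming from the context term $Ax_i$ survives, and that this surviving contribution matches $\left.\frac{dJ_{det}}{dB}\right|_{B=0}$ term by term.

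First I would write $F$ as a function of two arguments $u = An_j + \psi_j + \tilde m$ and $v = Ax_i$, and apply the chain rule to obtain
\begin{equation*}
\frac{dJ_{tied}}{dA} = \sum_{i,j}\mathbb{E}_{n_j}\!\bigl[\,2\bigl(F(u,v)-y_j\bigr)\bigl(F_u(u,v)\,n_j^T + F_v(u,v)\,x_i^T\bigr)\bigr],
\end{equation*}
where $F_u, F_v$ denote the partial derivatives of $F$ with respect to its first and second arguments. Next I would specialize to $A = 0$: then $u$ collapses to $\psi_j + \tilde m$ and $v$ collapses to $0$, so the residual $F(\psi_j+\tilde m,0)-y_j$ and the partials $F_u(\psi_j+\tilde m,0)$, $F_v(\psi_j+\tilde m,0)$ no longer depend on $n_j$. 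The expectation therefore pushes through, and the term multiplying $F_u\,n_j^T$ is killed by $\mathbb{E}[n_j]=0$ (exactly as in the preceding proposition that shows the untied case collapses). What remains is
\begin{equation*}
\left.\frac{dJ_{tied}}{dA}\right|_{A=0} = 2\sum_{i,j}\bigl(F(\psi_j+\tilde m,0)-y_j\bigr)F_v(\psi_j+\tilde m,0)\,x_i^T.
\end{equation*}

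Finally, I would differentiate $J_{det}(B)$ in the usual way, set $B=0$, and observe that the result is literally the same expression:
\begin{equation*}
\left.\frac{dJ_{det}}{dB}\right|_{B=0} = 2\sum_{i,j}\bigl(F(\psi_j+\tilde m,0)-y_j\bigr)F_v(\psi_j+\tilde m,0)\,x_i^T.
\end{equation*}
Since the two gradients coincide as matrices, one is zero iff the other is, which gives the claimed equivalence.

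The only thing to be careful about is the chain-rule bookkeeping when the same parameter $A$ enters through two different arguments of $F$: one must expand both contributions and not forget the second one, since it is precisely the surviving piece. Everything else is routine — the $\mathbb{E}[n_j]=0$ cancellation is immediate and requires no assumption on the nonlinearity of $F$ beyond differentiability at the relevant points.
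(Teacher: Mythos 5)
Your proposal is correct and follows essentially the same route as the paper's proof: expand the chain rule over both occurrences of $A$, kill the $n_j^T$ term at $A=0$ via $\mathbb{E}[n_j]=0$, and observe that the surviving context term equals $\left.\frac{dJ_{det}}{dB}\right|_{B=0}$. Your write-up is in fact more careful than the paper's (which omits the factor of $2$ and has minor notational slips in the arguments of $\nabla F$), but there is no substantive difference in approach.
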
 

\begin{proof}

Next, we show that $A=0$ is a critical point of $J_{tied}$ iff $B=0$ is a critical point of $J_{det}$:
\begin{equation}
\frac{\partial J_{tied}}{\partial A}\Big|_{A=0} = \sum_{i,j} (F(\psi_j + \Tilde{m}, 0) - y_j)\nabla F(\psi_i, 0)x_i ^T 
\end{equation}
\begin{equation}
\frac{\partial J_{det}}{\partial B}\Big|_{B=0} = \sum_{i,j} (F(\psi_j + \Tilde{m}, 0) - y_j)\nabla F(\psi_j, 0)x_i ^T 
\end{equation}

Therefore $\frac{\partial J_{tie}}{\partial A}\Big|_{A=0} = 0$ iff $\frac{\partial J_{det}}{\partial B}\Big|_{B=0}$
\end{proof}

\section{Optimal Predictor}
\label{sup:optimal}
Consider a random variable $X$ (corresponding to the context in our case. For simplicity assume $X$ is just the positional embedding of the context) that is used to predict a variable $Y$ (corresponding to the target in our case). But now instead of predicting from $X$, we use a noise variable $Z$
that is independent of both $X,Y$, and provide the predictor with only the noisy result $R = g(X,Z)$. Here $g$ is some mixing function (in our case $g(x,z) = x+z$). We next derive the optimal predictor $f(R)$ in this case. Formally we want to minimize:
\begin{equation}
    E_{R,Y}[(f(R) - Y)^2]
\end{equation}
A classic result in estimation is that this is optimized by the conditional expectation $f(r) = E[Y|R=r]$. 

We simplify this as follows:
\begin{eqnarray*}
    E[Y|R=r] &=&  \sum_{x,y} y p(Y=y,X=x|R=r) \\
    &=& \sum_{x,y} y p(y|X=x)p(X=x|R=r) \\
    &=& \sum_x E[Y|X=x]p(X=x|R=r)
\end{eqnarray*}
where in the second line we used the fact that:
\begin{equation}
    p(y,x|r) = p(y|x,r)p(x|r) =  p(y|x)p(x|r)
\end{equation}
To further illustrate, consider the case where $z$ is Gaussian with zero mean and unit variance. Then $p(x|r)$ is also Gaussian with expectation $r$, and the expression above amounts to 
convolution of the clean expected values with a Gaussian:
\begin{equation}
    E[Y|R=r] = \int_x E[Y|X=x]\frac{1}{\sqrt{2\pi}}e^{-0.5(x-r)^2}dx
\end{equation}
\section{Experiments and Results}
\label{app:experiments}
We include the full implementation details, pretraining configs and evaluation protocols for the Ablations (see Appendix~\ref{app:ablations}), Downstream Tasks (Appendix~\ref{app:downstream}), as well as full results and comparisons to invariance-based methods.

\subsection{Ablations}
\label{app:ablations}
Here we pretrain all models for $300$ epochs using $4$ V100 nodes, on a total batch size of $2048$. In all the ablation study experiments, we follow the exact recipe of~\cite{assran2023self}. We include the full config in Table~\ref{tab:downstream_ablations} for completeness.

To evaluate the pretrained models, we use linear probing evaluation using 1\% of IN-1k~\citep{russakovsky2015imagenet}. To obtain the features of an image, we apply the target encoder over the image to obtain a sequence of tokens corresponding to the image. We then average the tokens to obtain a single representative vector. The linear classifier is trained over this representation, maintaining the rest of the target encoder layers fixed.

\subsection{Downstream Tasks}
\label{app:downstream}
Here we pretrain I-JEPA with StoP for $600$ epochs using $4$ V100 nodes, on a total batch size of $2048$ using ViT-B (see config in Table~\ref{tab:downstream_base}) and ViT-L (see config in Table~\ref{tab:downstream_large}). For ViT-H we use float16 and train for $300$ epochs and follow the config in Table~\ref{tab:downstream_huge}. We follow similar configs compared to~\cite{assran2023self} except we usually use a lower learning rate. Intuitively, since StoP is stochastic it is more sensitive to high learning rates. 

For evaluation on downstream tasks, we use the features learned by the target-encoder and follow the protocol of VISSL~\cite{goyal2021vissl} that was utilized by I-JEPA~\cite{assran2023self}. Specifically, we report the best linear evaluation number among the average-pooled patch representation of the last layer and the concatenation of the last $4$ layers of the average-pooled patch representations. We report full results including comparisons to invariance-based methods for IN-1k linear evaluation Table~\ref{tb:supp:lineareval}, 1\% IN-1k finetuning results in~Table~\ref{supp:tab:finetune}, and other downstream tasks in Table~\ref{supp:tb:transfer-classification}.

For baselines that use Vision Transformers~\cite{dosovitskiy2020image} with a {\tt [cls]} token (e.g, iBOT~\cite{zhou2021ibotyes}, DINO~\cite{caron2021emerging} or MAE~\cite{he2021masked}), we use the default configurations of VISSL~\cite{goyal2021vissl} to evaluate the publicly available checkpoints on iNaturalist18~\cite{van2018inaturalist}, CIFAR100~\cite{krizhevsky2009learning}, Clevr/Count~\cite{johnson2017clevr, vtab}, Clevr/Dist~\cite{johnson2017clevr, vtab}, and Places205~\cite{zhou2014learning}.
Following the evaluation protocol of VISSL~\cite{goyal2021vissl}, we freeze the encoder and return the best number among the {\tt [cls]} token representation of the last layer and the concatenation of the last $4$ layers of the {\tt [cls]} token.

For semi-supervised video object segmentation, we propagate the first labeled frame in a video using the similarity between adjacent frames features. To label the video using the frozen features, we follow the code and hyperparams of~\cite{caron2021emerging}. To evaluate the segmented videos, we use the evaluation code of DAVIS 2017~\citep{pont20172017} and include full results in Table~\ref{tb:supp:labelprop}.

\begin{table}[H]
\vspace{5mm}
\begin{minipage}{0.45\textwidth}
\tablestyle{6pt}{1.02}
\footnotesize
\begin{tabular}{y{96}|y{68}}
config & value \\
\shline
optimizer & AdamW \\
epochs & 300 \\
learning rate & $1e^{-3}$ \\
weight decay &  $(0.04, 0.4)$ \\
batch size & 2048 \\
learning rate schedule & cosine decay \\
warmup epochs & 15\\
encoder arch. & ViT-B  \\
predicted targets & 4 \\
predictor depth & 6  \\
predictor attention heads & 12 \\ 
predictor embedding dim. & 384 \\
$\sigma$ (noise hyperparam) & $0.25$ \\

\end{tabular}
\vspace{-.5em}
\caption{\small\textbf{Pretraining setting for ablations}. Using ViT-B encoder, trained for $300$ epochs, config strictly follows~\cite{assran2023self}.}
\label{tab:downstream_ablations} \vspace{-.5em}
\end{minipage}%
\hfill
\begin{minipage}{0.45\textwidth}
\tablestyle{6pt}{1.02}
\footnotesize
\begin{tabular}{y{96}|y{68}}
config & value \\
\shline
optimizer & AdamW \\
epochs & $600$ \\
learning rate & $8e^{-4}$ \\
weight decay &  $(0.04, 0.4)$ \\
batch size & $2048$ \\
learning rate schedule & cosine decay \\
warmup epochs & 15\\
encoder arch. & ViT-B  \\
predicted targets & 4 \\
predictor depth & 6  \\
predictor attention heads & 12 \\ 
predictor embedding dim. & 384 \\
$\sigma$ (noise hyperparam) & $0.25$ \\
\end{tabular}
\vspace{-.5em}
\caption{\small\textbf{Pretraining setting for downstream tasks (ViT-B)}. All models trained for  $600$ epochs.}
\label{tab:downstream_base} \vspace{-.5em}
\end{minipage}
\end{table}

\begin{table}[H]
\vspace{5mm}
\begin{minipage}{0.45\textwidth}
\tablestyle{6pt}{1.02}
\footnotesize
\begin{tabular}{y{96}|y{68}}
config & value \\
\shline
optimizer & AdamW \\
epochs & $600$ \\
learning rate & $8e^{-4}$ \\
weight decay &  $(0.04, 0.4)$ \\
batch size & $2048$ \\
learning rate schedule & cosine decay \\
warmup epochs & 15\\
encoder arch. & ViT-L  \\
predicted targets & 4 \\
predictor depth & 12  \\
predictor attention heads & 16 \\ 
predictor embedding dim. & 384 \\
$\sigma$ (noise hyperparam) & $0.25$ \\
\end{tabular}
\vspace{-.5em}
\caption{\small\textbf{Pretraining setting for downstream tasks (ViT-L)}. All models trained for  $600$ epochs.}
\label{tab:downstream_large} \vspace{-.5em}
\end{minipage}%
\hfill
\begin{minipage}{0.49\textwidth}
\tablestyle{6pt}{1.02}
\footnotesize
\begin{tabular}{y{96}|y{68}}
config & value \\
\shline
optimizer & AdamW \\
epochs & $600$ \\
learning rate & $1e^{-3}$ \\
weight decay &  $(0.04, 0.4)$ \\
batch size & $2048$ \\
learning rate schedule & cosine decay \\
warmup epochs & 40\\
encoder arch. & ViT-H  \\
predicted targets & 4 \\
predictor depth & 12  \\
predictor attention heads & 16 \\ 
predictor embedding dim. & 384 \\
$\sigma$ (noise hyperparam) & $0.2$ \\
\end{tabular}
\vspace{-.5em}
\caption{\small\textbf{Pretraining setting for downstream tasks (ViT-H)}. Trained for  $300$ epochs.}
\label{tab:downstream_huge} \vspace{-.5em}
\end{minipage}
\end{table}

\begin{table*}[t]
    \centering
    \small
    \begin{tabular}{llcccccc}
            \bf\small Method & \bf\small Arch. & \bf\small CIFAR100 & \bf\small Places205 &  \bf\small iNat18 & \bf\small CLEVR/Count & \bf\small CLEVR/Dist  \\
        \toprule
        \multicolumn{5}{l}{\small\bf\it Invariance-based methods \textbf{(use extra image augmentations)}}\\
        \small DINO & \small ViT-B/16 & 84.8 & 55.2 & 50.1 & 83.2 & 53.4\\ [1ex]
        \multirow{2}{*}{\small iBOT} & \small ViT-B/16 & 85.5 & 56.7 & 50.0 & 62.1 & 64.6 \\
        & \small ViT-L/16 & 88.3 & 60.4 & 57.3 & 85.7 & 62.8 \\
        \midrule
        \multicolumn{5}{l}{\small\bf\it Masked Image Modeling Methods}\\
        
        \small data2vec & \small ViT-L/16 & 81.6 & 54.6 & 28.1 & 85.3 & 71.3 \\[1ex]
        \multirow{3}{*}{\small MAE} & \small ViT-B/16 & 68.1 & 49.2 & 26.8 & 86.6 & 70.8 \\
         & \small ViT-L/16 & 77.4 & 54.4 & 33.0 & \textbf{92.1} & 73.0 \\
          & \small ViT-H/14 & 77.3 & 55.0 & 32.9 & 90.5 & 72.4 \\
         [1ex]

        \multirow{3}{*}{\small I-JEPA} & \small ViT-B/16 & 69.2 & 53.4 & 43.4 & 82.2 & 70.7 \\
        & \small ViT-L/16 & 83.6 & 56.5 & 48.4 & 85.6 & 71.2\\
        & \small ViT-H/14 & 87.5 & 58.4 & 47.6 & 86.7 & 72.4 \\[1ex]

        \multirow{3}{*}{\small +StoP} & \small \cc ViT-B/16 & \cc 81.2 & \cc 54.3 & \cc 44.7 & \cc 83.7 & \cc 71.3\\
        & \small \cc ViT-L/16 & \cc 84.7 & \cc 57.2 & \cc 49.2 & \cc 85.7 & \cc 70.2 \\
        
        & \small \cc ViT-H/14 & \cc \bf 87.7 & \cc \bf \textbf{58.4} & \cc \bf 50.9 & \cc 88.0 & \cc \bf 72.5 \\
        \bottomrule
    \end{tabular}
    \caption{\small
    \textbf{Linear-probe transfer for various downstream tasks}. Linear-evaluation on downstream image classification, object counting, and tracking tasks. StoP significantly outperforms previous MIM methods that don't utilize image augmentations like I-JEPA and MAE, and decreases the gap with the best invariance-based methods that utilize data augmentations during pretraining.}
    \label{supp:tb:transfer-classification}
\end{table*}

\begin{table}[t]
  \centering
  \begin{minipage}{0.49\linewidth}
    \begin{minipage}{1\linewidth}
\resizebox{0.95\linewidth}{!}{
\begin{tabular}{lllc}
        \bf\small Method & \bf\small Arch. & \bf\small Epochs & \bf\small Top-1\\
        \toprule
        \multicolumn{4}{l}{\small\bf\it Invariance-based methods} (\textbf{\small use extra image augmentations})\\
        \small SimCLR v2 & \small RN152 ($2\times$) & 800 & 79.1 \\ [1ex]
        
        \small BYOL & \small RN200 ($2\times$) & 800 & 79.6 \\ [1ex]

        \multirow{2}{*}{\small DINO} & \small ViT-B/16 & 400 & 78.1 \\
        & \small ViT-B/8 & 300 & 80.1 \\ [1ex]
        
        \multirow{2}{*}{\small MoCo v3} & \small ViT-B/16 & 300 & 76.7\\ 
         & \small ViT-BN-L/7 & 300 & 81.0\\[1ex]
        
        \small MSN & \small ViT-L/7 & 200 & 80.7 \\ [1ex]
        
        \multirow{2}{*}{\small iBOT} & \small ViT-B/16 & 250 & 79.8\\
        & \small ViT-L/16 & 250 & 81.0 \\
        \midrule
        \multicolumn{4}{l}{\small\bf\it Masked Image Modeling methods}\\
        data2vec & \small ViT-L/16 & 1600 & 77.3 \\[1ex]
        \multirow{2}{*}{\small MAE} & \small ViT-B/16 & 1600 & 68.0\\
        & \small ViT-L/16 & 1600 & 75.8\\
        & \small ViT-H/14 & 1600 & 77.2\\
        [1ex]
        \multirow{2}{*}{\small I-JEPA} 
        & \small ViT-B/16 & 600 & {72.9}\\
        & \small ViT-L/16 & 600  & {77.5}\\ 
        & \small ViT-H/14 &  300 & {79.3}\\[1ex]
        \multirow{2}{*}{+StoP (ours)} 
        & \cc\small ViT-B/16 & \cc 600 & \cc {74.5}\\
        & \cc\small ViT-L/16 & \cc 600  & \cc{78.5}\\
        & \cc\small ViT-H/14 &  \cc 300 & \cc\bf{79.6}\\
        \bottomrule
    \end{tabular}}
\caption{\small{\bf Linear-evaluation on IN-1k}. Performance of invariance based and MIM approaches.}
\label{tb:supp:lineareval}
  \end{minipage}
  \vfill

  \end{minipage}%
  \hfill
\begin{minipage}{0.45\linewidth}
    \begin{minipage}{1\linewidth}
    \centering
    \small
    \resizebox{\linewidth}{!}{%
    \begin{tabular}{l l c c c }
        \bf\small Method & \bf\small Arch. & \bf\small J-Mean & \bf\small F-Mean & \bf\small J\&F Mean\\
        \toprule
        \multicolumn{5}{l}{\small\bf\it Invariance-based methods \textbf{(use extra image augmentations)}}\\
        \small DINO & \small ViT-B/16 & 60.7 & 63.9 & 62.3\\ [1ex]
        \multirow{2}{*}{\small iBOT} & \small ViT-B/16 & 60.9 & 63.3 & 62.1\\
        & \small ViT-L/16 &  61.7 & 63.9 & 62.8 \\
        \midrule
        \multicolumn{5}{l}{\small\bf\it Masked Image Modeling Methods}\\
        \multirow{3}{*}{\small MAE} & \small ViT-B/16 & 49.4 & 52.6 & 50.9 \\
        & \small ViT-L/16 & 52.5 & 54.3 & 53.4\\
        & \small ViT-H/14 & 54.0 & 57.0 & 55.5 \\ [1ex]
        \multirow{2}{*}{\small I-JEPA} & \small ViT-B/16 &  56.1 & 56.2 & 56.1 \\
        & \small ViT-L/16 & 56.1 & 55.7 & 55.9 \\ 
        & \small ViT-H/14 & 58.5 & 60.9 & 59.7 \\ [1ex]
        \multirow{3}{*}{\small +StoP} & \cc\small ViT-B/16 & \cc 56.6 & \cc 57.3 & \cc 57.0 \\
        & \cc\small ViT-L/16 & \cc 58.1 & \cc 58.7 & \cc 58.4\\
        & \cc\small ViT-H/14 & \cc \bf 58.9 & \cc \bf 61.2 & \cc \bf 60.1\\
        \bottomrule
\end{tabular}}
    \caption{\small{\textbf{Video objects semi-supervised segmentation.} MIM and Invarianced-based methods. Results reported on DAVIS 2017 dataset. }}
  \label{tb:supp:labelprop}
  \end{minipage}%
  \vfill
  \vspace{21mm}
  \begin{minipage}{1\linewidth}
      \resizebox{\linewidth}{!}{%
  \begin{tabular}{lllc}
        \bf\small Method & \bf\small Arch. & \bf\small Epochs & \bf\small Top-1\\
        \toprule
        \multicolumn{4}{l}{\small\bf\it Invariance-based methods} (\textbf{\small use extra image augmentations})\\
        \small DINO & \small ViT-B/8 & 300 & 70.0 \\
        \multirow{1}{*}{\small iBOT} & \small ViT-B/16 & 400 & 69.7\\
        \midrule
        \multicolumn{4}{l}{\small\bf\it Masked Image Modeling methods}\\
        \multirow{1}{*}{\small MAE} & \small ViT-L/16 & 1600 & 67.0\\
        \multirow{1}{*}{\small I-JEPA} 
        & \small ViT-L/16 & 600  & {69.4}\\ 
        \multirow{1}{*}{+StoP (ours)} 
        & \cc\small ViT-L/16 & \cc 600  & \cc{\textbf{71.7}}\\
        \bottomrule
    \end{tabular}
    }
  \caption{\textbf{Finetuning results over ImageNet with 1\% labels.} Comparison of MIM and invariance-based methods.}
    \label{supp:tab:finetune}
\end{minipage}
  \end{minipage}
\end{table}

\end{document}